\title{The Reactor: \\ A fast and sample-efficient Actor-Critic agent for 
Reinforcement Learning}
\author{Audr{\=u}nas Gruslys, \\ DeepMind \\ \texttt{audrunas@google.com}
\And
Will Dabney, \\ DeepMind \\ \texttt{wdabney@google.com}
\And
Mohammad Gheshlaghi Azar, \\ DeepMind\\ \texttt{mazar@google.com}
\And
Bilal Piot, \\ DeepMind \\ \texttt{piot@google.com}
\And
Marc G. Bellemare, \\ Google Brain \\ \texttt{bellemare@google.com}
\And
R{\'e}mi Munos, \\ DeepMind \\ \texttt{munos@google.com}
}
\newcommand{\beq}{\begin{equation}}
\newcommand{\eeq}{\end{equation}}
\newcommand{\beqa}{\begin{eqnarray}}
\newcommand{\eeqa}{\end{eqnarray}}
\newcommand{\beqan}{\begin{eqnarray*}}
\newcommand{\eeqan}{\end{eqnarray*}}
\newcommand{\E}{{\mathbb E}}
\newcommand{\action}{{\hat a}}
\newcommand{\eqdef}{\stackrel{\rm def}{=}}
\newtheorem{assume}{Assumption}
\definecolor{reactor_color}{rgb}{0.874509804,0.784313725,0.831372549}
\definecolor{ac_color}{rgb}{0.835294118,0.890196078,0.788235294}
\definecolor{dqn_color}{rgb}{0.784313725,0.858823529,0.933333333}
\begin{document} 

\maketitle

\begin{abstract}
In this work, we present a new agent architecture, called Reactor, which 
combines multiple algorithmic and architectural contributions to produce 
an agent with higher sample-efficiency than Prioritized Dueling DQN 
\citep{wang2017sample} and Categorical DQN \citep{bellemare2017distributional}, 
while giving better run-time performance than A3C \citep{mnih2016asynchronous}. 
Our first contribution is a new policy evaluation algorithm called Distributional 
Retrace, which brings multi-step off-policy updates to the 
distributional reinforcement learning setting. 
The same approach can be used to convert several classes of 
multi-step policy evaluation algorithms, designed for expected value evaluation, 
into distributional algorithms. Next, we introduce the {\em $\beta$-leave-one-out} 
policy gradient algorithm, which improves the trade-off between variance 
and bias by using action values as a baseline. Our final algorithmic contribution is 
a new prioritized replay algorithm for sequences, which exploits the temporal locality of 
neighboring observations for more efficient replay prioritization. 
Using the Atari 2600 benchmarks, we show that each of these innovations contribute to 
both sample efficiency and final agent performance. Finally, we demonstrate that Reactor 
reaches state-of-the-art performance after 200 million frames and less than a day of training.
\end{abstract} 

\section{Introduction}
\label{introduction}

Model-free deep reinforcement learning has achieved several remarkable 
successes in domains ranging from super-human-level control in video games 
\citep{mnih15human} and the game of Go \citep{silver2016mastering, agzero}, to 
continuous motor control tasks \citep{lillicrap2015continuous,schulman2015trust}.

Much of the recent work can be divided into two categories. First, 
those of which that, often building on the DQN framework, 
act $\epsilon$-greedily according to an action-value function and train 
using mini-batches of transitions sampled from 
an experience replay buffer \citep{van2016deep,wang2015dueling,he2016learning,anschel2017averaged}. These \textit{value-function agents} benefit from 
improved sample complexity, but tend to 
suffer from long runtimes (e.g. DQN requires approximately a week to train on 
Atari). The second category are the
\textit{actor-critic agents}, which includes the asynchronous advantage actor-critic (A3C) algorithm,
introduced by \citet{mnih2016asynchronous}. These agents train on 
transitions collected by multiple actors running, 
and often training, in parallel \citep{schulman2017proximal,vezhnevets2017feudal}. The deep actor-critic agents train on each 
trajectory only once, and thus tend to have 
worse sample complexity. However, their distributed nature allows significantly 
faster training in terms of wall-clock time.
Still, not all existing algorithms can be put in the above two categories and 
various hybrid approaches do exist 
\citep{zhao2016deep,o2016combining,gu2016q,wang2017sample}.

Data-efficiency and off-policy learning are essential for many 
real-world domains where 
interactions with the environment are expensive. Similarly, wall-clock time 
(time-efficiency) directly impacts an algorithm's 
applicability through resource costs. The focus of this work is to produce an agent that 
is sample- and time-efficient. To this end, we introduce a new reinforcement learning agent, 
called {\em Reactor} (Retrace-Actor), which 
takes a principled approach to combining the sample-efficiency of off-policy 
experience replay with the time-efficiency 
of asynchronous algorithms. We combine recent advances in both 
categories of agents with novel contributions to produce an 
agent that inherits the benefits of both and reaches state-of-the-art 
performance over 57 Atari 2600 games.

Our primary contributions are (1) a novel policy gradient algorithm, 
$\beta$-LOO, which makes better use of action-value estimates to improve the 
policy gradient; (2) the first multi-step off-policy distributional reinforcement 
learning algorithm, distributional Retrace($\lambda$); (3) a novel prioritized 
replay for off-policy sequences of transitions; and (4) an optimized network and 
parallel training architecture.

We begin by reviewing background material, including relevant improvements to 
both value-function agents and actor-critic agents. In Section~\ref{sec:reactor} 
we introduce each of our primary contributions and present the Reactor agent. 
Finally, in Section~\ref{sec:results}, 
we present experimental results on the 57 Atari 2600 games from the Arcade 
Learning Environment (ALE) \citep{bellemare2013arcade}, as well as a series of 
ablation studies for the various components of Reactor.

\section{Background}
We consider a Markov decision process (MDP) with state space $X$ and finite 
action space $\mathcal{A}$. 
A (stochastic) policy $\pi(\cdot | x)$ is a mapping from states $x \in X$ to a 
probability distribution over 
actions. We consider a $\gamma$-discounted infinite-horizon criterion, with 
$\gamma \in [0, 1)$ the discount factor, and define for policy $\pi$ the action-value of 
a state-action pair $(x,a)$ as 
$$Q^{\pi}(x,a) \eqdef \E\Big[ {\textstyle \sum_{t\geq 0} } \gamma^t r_t | x_0=x, a_0=a, \pi\Big],$$
where $(\{x_t\}_{t\geq 0})$ is a trajectory generated by choosing $a$ in $x$ 
and following $\pi$ thereafter, i.e., $a_t\sim \pi(\cdot|x_t)$ (for $t\geq 1$), 
and $r_t$ is the reward signal. The objective in reinforcement learning 
is to find an optimal policy $\pi^*$, which maximises $Q^{\pi}(x,a)$. 
The optimal action-values are given by $Q^*(x,a)=\max_{\pi} Q^{\pi}(x,a)$. 


\subsection{Value-based algorithms}

The Deep Q-Network (DQN) framework, introduced by \citet{mnih15human}, 
popularised the current line of research into deep reinforcement learning by reaching 
human-level, and beyond, performance across 57 Atari 2600 games in the ALE. 
While DQN includes many specific components, the essence of the framework, much 
of which is shared by Neural Fitted Q-Learning \citep{riedmiller2005neural}, is 
to use of a deep convolutional neural network to approximate an action-value 
function, training this approximate action-value function using the 
Q-Learning algorithm \citep{watkins1992} and mini-batches of one-step 
transitions ($x_t, a_t, r_t, x_{t+1}, \gamma_t$) drawn randomly from an 
experience replay buffer \citep{lin1992self}. Additionally, the next-state 
action-values are taken from a \textit{target network}, which is updated to 
match the current network periodically. Thus, the temporal difference (TD) error 
for transition $t$ used by these algorithms is given by
\beqa\label{eq:tderr}
\delta_t = r_t + \gamma_t \max_{a' \in \mathcal{A}} Q(x_{t+1}, a'; \bar \theta) - Q(x_t, 
a_t; \theta),
\eeqa
where $\theta$ denotes the parameters of the network and $\bar \theta$ are the 
parameters of the target network.

Since this seminal work, we have seen numerous extensions and improvements that 
all share the same underlying framework. Double DQN \citep{van2016deep}, 
attempts to correct for the over-estimation bias inherent in Q-Learning by 
changing the second term of \eqref{eq:tderr} to $Q(x_{t+1}, \arg\max_{a' \in \mathcal{A}} 
Q(x_{t+1}, a'; \theta); \bar \theta)$. The dueling architecture 
\citep{wang2015dueling}, changes the network to estimate 
action-values using separate network heads $V(x; \theta)$ and $A(x, a; \theta)$ with
$$Q(x, a; \theta) = V(x; \theta) + A(x, a; \theta) - \frac{1}{|A|} \sum_{a'} A(x, a'; \theta).$$

Recently, \citet{rainbow} introduced Rainbow, a value-based 
reinforcement learning agent combining many of these improvements into a single 
agent and demonstrating that they are largely complementary. Rainbow significantly 
out performs previous methods, but also inherits the poorer time-efficiency of the DQN framework.
We include a detailed comparison between Reactor and Rainbow in the Appendix. 
In the remainder of the section we will describe in more depth other recent 
improvements to DQN.

\subsubsection{Prioritized experience replay}
The experience replay buffer was first introduced by \citet{lin1992self} and 
later used in DQN \citep{mnih15human}. Typically, the replay buffer is 
essentially a first-in-first-out queue with new transitions gradually replacing 
older transitions. The agent would then sample a mini-batch uniformly at random 
from the replay buffer. Drawing inspiration from prioritized sweeping 
\citep{moore1993prioritized}, prioritized experience replay replaces the uniform 
sampling with prioritized sampling proportional to the absolute TD error 
\citep{schaul16prioritized}.

Specifically, for a replay buffer of size $N$, prioritized experience replay 
samples transition $t$ with probability $P(t)$, and applies weighted 
importance-sampling with $w_t$ to correct for the prioritization bias, where
\begin{equation}
\label{eq-importance}
P(t) = \frac{p_t^\alpha}{\sum_k p_k^\alpha},\ \quad w_t = \left( \frac{1}{N} 
\cdot \frac{1}{P(t)} \right)^\beta,\ \quad p_t = |\delta_t| + \epsilon,\ \quad 
\alpha, \beta, \epsilon > 0.
\end{equation}

Prioritized DQN significantly increases both the sample-efficiency and final 
performance over DQN on the Atari 2600 benchmarks \citep{schaul2015prioritized}.

\subsubsection{Retrace($\lambda$)}\label{sec:retrace}
Retrace($\lambda$) is a convergent off-policy multi-step algorithm extending the 
DQN agent \citep{munos2016safe}. Assume that some trajectory $\{x_0, a_0, r_0, x_1, a_1, 
r_1, \dots, x_t, a_t, r_t, \dots, \}$ has been generated according to {\em 
behaviour policy} $\mu$, i.e., $a_t\sim \mu(\cdot|x_t)$. Now, we aim to 
evaluate the value of a different {\em target policy} $\pi$, i.e.~we want to 
estimate 
$Q^{\pi}$. The Retrace algorithm will update our current estimate $Q$ of 
$Q^{\pi}$ in the direction of
\beqa\label{eq:retrace}
\Delta Q(x_t, a_t)\eqdef {\textstyle \sum_{s\geq t}} \gamma^{s-t} (c_{t+1}\dots 
c_s) 
\delta^{\pi}_s Q ,
\eeqa
where $\delta^{\pi}_s Q \eqdef  r_s+\gamma \E_{\pi} [Q(x_{s+1},\cdot)] - 
Q(x_s,a_s)$ is the temporal difference at time $s$ under $\pi$, 
and 
\beq \label{eq:trace.cut}
c_s=\lambda\min\big(1,\rho_s\big),\ \quad \rho_s = 
\frac{\pi(a_s|x_s)}{\mu(a_s|x_s)}.
\eeq 
The Retrace algorithm comes with the theoretical guarantee that in finite state 
and action spaces, repeatedly updating our current estimate $Q$ according to 
(\ref{eq:retrace}) produces a sequence of Q functions which converges to 
$Q^{\pi}$ for a fixed 
$\pi$ or to $Q^*$ if we consider a sequence of policies $\pi$ which become 
increasingly greedy w.r.t.~the $Q$ estimates \citep{munos2016safe}.

\subsubsection{Distributional RL}\label{sec:distrl}
Distributional reinforcement learning refers to a class of algorithms that 
directly estimate the distribution over returns, 
whose expectation gives the traditional value function 
\citep{bellemare2017distributional}. Such approaches can be made 
tractable with a distributional Bellman equation, and the recently proposed 
algorithm $C51$ showed state-of-the-art performance 
in the Atari 2600 benchmarks. $C51$ parameterizes the distribution over returns 
with a mixture over Diracs centered on a uniform grid,
\begin{equation}
Q(x, a; \theta) = \sum_{i=0}^{N-1} q_i(x, a; \theta) z_i,\ \quad q_i = 
\frac{e^{\theta_i(x, a)}}{\sum_{j=0}^{N-1} e^{\theta_j(x, a)}},\ \quad z_i = v_{\min} + i 
\frac{v_{\max} - v_{\min}}{N-1},
\end{equation}
with hyperparameters $v_{\min} , v_{\max}$ that bound the distribution support of size $N$.

\subsection{Actor-critic algorithms}\label{sec:ac}
In this section we review the actor-critic framework for reinforcement learning 
algorithms and then discuss 
recent advances in actor-critic algorithms along with their various trade-offs. 
The asynchronous advantage actor-critic (A3C) algorithm 
\citep{mnih2016asynchronous}, maintains a parameterized policy $\pi( a | x; 
\theta)$ and value function $V(x; \theta_v)$, which are updated with
\begin{eqnarray}
\triangle \theta = \nabla_\theta \log \pi(a_t | x_t; \theta) A(x_t, a_t;
\theta_v),\ \quad \triangle \theta_v = A(x_t, a_t; \theta_v) 
\nabla_{\theta_v} V(x_t),\\
\text{where, }\ \quad A(x_t, a_t; \theta_v) = \sum_k^{n-1} \gamma^k 
r_{t+k} + \gamma^n V(x_{t+n}) - V(x_t).\label{pgadv}
\end{eqnarray}
A3C uses $M = 16$ parallel CPU workers, each acting independently in the 
environment and applying the above updates asynchronously to a shared set of 
parameters. In contrast to the previously discussed value-based methods, A3C is 
an on-policy algorithm, and does not use a GPU nor a replay buffer. 

Proximal Policy Optimization (PPO) is a closely related actor-critic algorithm 
\citep{schulman2017proximal}, which replaces the advantage \eqref{pgadv} with,
$$\min(\rho_t A(x_t, a_t; \theta_v), clip(\rho_t, 1 - 
\epsilon, 1 + \epsilon) A(x_t, a_t; \theta_v)),\ \epsilon > 0,$$
where $\rho_t$ is as defined in Section~\ref{sec:retrace}. Although both PPO and 
A3C run $M$ parallel workers collecting 
trajectories independently in the environment, PPO collects these experiences to 
perform a single, synchronous, update in contrast 
with the asynchronous updates of A3C. 

Actor-Critic Experience Replay (ACER) extends the A3C framework with an 
experience replay buffer, Retrace algorithm for off-policy corrections, and 
the Truncated Importance Sampling Likelihood Ratio (TISLR) algorithm 
used for off-policy policy optimization \citep{wang2017sample}.

\section{The Reactor}\label{sec:reactor}
The Reactor is a combination of four novel contributions on top of recent 
improvements to both 
deep value-based RL and policy-gradient algorithms. 
Each contribution moves Reactor towards our goal of achieving both sample and time efficiency.

\subsection{$\beta$-LOO}\label{sec:betaloo}

The Reactor architecture represents both a policy $\pi(a|x)$ and action-value function 
$Q(x,a)$. We use a policy gradient algorithm to train the actor $\pi$ which 
makes use of our current estimate $Q(x,a)$ of $Q^{\pi}(x,a)$.  
Let $V^{\pi}(x_0)$ be the value function at some initial state $x_0$, the 
policy gradient theorem says that $\nabla 
V^{\pi}(x_0) = \E\big[\sum_t \gamma^t \sum_a Q^{\pi}(x_t,a) \nabla \pi(a|x_t) 
\big]$, where $\nabla$ refers to the gradient w.r.t.~policy parameters \citep{Sutton00policygradient}. We now 
consider several possible ways to estimate this gradient.

To simplify notation, we drop the dependence on the state $x$ for now and 
consider the problem of estimating the quantity
\beq\label{eq:G}
G = {\textstyle \sum_a } Q^{\pi}(a) \nabla \pi(a).
\eeq

In the off-policy case, we consider estimating $G$ using a single action $\action$ 
drawn from a (possibly different from $\pi$) behaviour distribution 
$\action\sim\mu$. Let us assume that for the chosen action $\action$ we have access to an 
unbiased estimate $R(\action)$ of $Q^{\pi}(\action)$. Then, we can use likelihood ratio (LR) 
method combined with an importance sampling (IS) ratio (which we call ISLR) to 
build an unbiased estimate of $G$:
$$ \hat G_{\mbox{\tiny ISLR}} = \frac{\pi(\action)}{\mu(\action)}(R(\action) - V) 
\nabla\log\pi(\action) ,$$
where $V$ is a baseline that depends on the state but not on the chosen action. 
However this estimate suffers from high variance. A possible way for reducing 
variance is to estimate $G$ directly from (\ref{eq:G}) by using the return 
$R(\action)$ for the chosen action $\action$ and our current estimate $Q$ of $Q^{\pi}$ for 
the other actions, which leads to the so-called {\em leave-one-out} (LOO) 
policy-gradient estimate:
\beq\label{eq:LOO}
\hat G_{\mbox{\tiny LOO}} = R(\action) \nabla \pi(\action) + {\textstyle \sum_{a\neq \action} } 
Q(a) \nabla 
\pi(a) .
\eeq
This estimate has low variance but may be biased if the estimated $Q$ values 
differ from $Q^{\pi}$. A better bias-variance tradeoff may be obtained by 
the more general {\em $\beta$-LOO policy-gradient} estimate:
\beq\label{eq:beta.LOO}
\hat G_{\mbox{\tiny $\beta$-LOO}} = \beta (R(\action) - Q(\action)) \nabla \pi(\action) + 
{\textstyle \sum_{a} }
 Q(a) \nabla \pi(a),
\eeq
where $\beta =  \beta(\mu,\pi,\action)$ can be a function of both policies, $\pi$ and 
$\mu$, and the selected action $\action$. Notice that when $\beta=1$, 
(\ref{eq:beta.LOO}) reduces to (\ref{eq:LOO}), and when $\beta=1/\mu(\action)$, then 
(\ref{eq:beta.LOO}) is
\beq \label{eq:1/mu.loo}
\hat G_{\mbox{\tiny $\frac{1}{\mu}$-LOO}} = 
\frac{\pi(\action)}{\mu(\action)} (R(\action) - Q(\action)) \nabla\log\pi(\action) + {\textstyle \sum_a } 
Q(a) 
\nabla\pi(a).
\eeq
This estimate is unbiased and can be seen as a generalization of $\hat 
G_{\mbox{\tiny ISLR}}$ where instead of using a state-only dependent baseline, 
we use a state-and-action-dependent baseline (our current estimate $Q$) and add 
the correction term $\sum_a\nabla\pi(a)Q(a)$ to cancel the bias. 
Proposition~\ref{prop:bias} gives our analysis of the bias of $G_{\mbox{\tiny 
$\beta$-LOO}}$, with a proof left to the Appendix.
\begin{restatable}{proposition}{propbias}\label{prop:bias}
Assume $\action\sim \mu$ and that $\E[R(\action)]=Q^{\pi}(\action)$. Then, 
the bias of $G_{\mbox{\tiny $\beta$-LOO}}$ is $\big| \sum_a (1-\mu(a)\beta(a)) 
\nabla\pi(a) [Q(a)-Q^{\pi}(a)]\big|$.
\end{restatable}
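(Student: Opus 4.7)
The plan is to compute $\E_{\hat a\sim\mu}[\hat G_{\beta\text{-LOO}}]$ explicitly and then subtract the true gradient $G = \sum_a Q^\pi(a)\nabla\pi(a)$ from it. Taking the absolute value of this difference should recover the formula in the statement, so the proof is essentially a linearity-of-expectation calculation with no delicate probabilistic step.

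First I would split $\hat G_{\beta\text{-LOO}}$ into its two summands. The second summand $\sum_a Q(a)\nabla\pi(a)$ does not depend on $\hat a$, so it passes through the expectation unchanged. For the first summand, $\beta(\hat a)(R(\hat a)-Q(\hat a))\nabla\pi(\hat a)$, I would first condition on $\hat a$ and use the hypothesis $\E[R(\hat a)\mid \hat a]=Q^\pi(\hat a)$ to replace $R(\hat a)$ by $Q^\pi(\hat a)$, then average over $\hat a\sim\mu$ by a standard change-of-variable, yielding $\sum_a \mu(a)\beta(a)(Q^\pi(a)-Q(a))\nabla\pi(a)$.

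Next I would collect terms and compare to $G$. Writing
\begin{equation*}
\E[\hat G_{\beta\text{-LOO}}]-G = \sum_a \mu(a)\beta(a)(Q^\pi(a)-Q(a))\nabla\pi(a) + \sum_a(Q(a)-Q^\pi(a))\nabla\pi(a),
\end{equation*}
the two sums combine into $\sum_a (1-\mu(a)\beta(a))(Q(a)-Q^\pi(a))\nabla\pi(a)$, and taking the absolute value gives exactly the expression claimed in Proposition~\ref{prop:bias}.

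There is no real obstacle here; the only thing to be careful about is that $\beta$ is allowed to depend on $\hat a$ (as indicated by the notation $\beta(\mu,\pi,\hat a)$), so one must treat $\beta$ as a function inside the expectation rather than as a constant. Once that is handled, the argument is purely algebraic.
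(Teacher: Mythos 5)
Your proposal is correct and follows essentially the same route as the paper's proof: compute $\E_{\action\sim\mu}[\hat G_{\mbox{\tiny $\beta$-LOO}}]$ by linearity, use $\E[R(\action)\mid\action]=Q^{\pi}(\action)$ to turn the first summand into $\sum_a \mu(a)\beta(a)(Q^{\pi}(a)-Q(a))\nabla\pi(a)$, and combine with $G=\sum_a Q^{\pi}(a)\nabla\pi(a)$ to obtain $\sum_a (1-\mu(a)\beta(a))[Q(a)-Q^{\pi}(a)]\nabla\pi(a)$. Your explicit remark that $\beta$ must be kept inside the expectation as a function of $\action$ is a point the paper handles implicitly but does not spell out.
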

Thus the bias is small when $\beta(a)$ is close to $1/\mu(a)$, or 
when the $Q$-estimates are close to the true $Q^{\pi}$ values, and unbiased
regardless of the estimates if $\beta(a) = 1/\mu(a)$. The variance 
is low when $\beta$ is small, therefore, in order to improve the bias-variance tradeoff 
we recommend using the $\beta$-LOO estimate with $\beta$ defined as:
$\beta(\action) = \min\big( c, \frac{1}{\mu(\action)}\big),$ for some constant
$c\geq 1$. This truncated $1/\mu$ coefficient shares similarities with the truncated IS 
gradient estimate introduced in \citep{wang2017sample} (which we call TISLR for 
truncated-ISLR):
\beqan
\hat G_{\mbox{\tiny TISLR}}\! = \!\min\Big(c, \frac{\pi(\action)}{\mu(\action)}\Big) 
(R(\action) - V) \nabla\log\pi(\action)  \! + \! \sum_a \Big( \frac{\pi(a)}{\mu(a)}-c 
\Big)_{\!\! +} \mu(a) 
(Q^{\pi}(a) - V) \nabla\log\pi(a) .
\eeqan

The differences are: (i) we truncate $1/\mu(\action) = \pi(\action)/\mu(\action)\times 1/\pi(\action)$ 
instead of truncating $ \pi(\action)/\mu(\action)$, which provides an additional variance 
reduction due to the variance of the LR 
$\nabla\log\pi(\action)=\frac{\nabla\pi(\action)}{\pi(\action)}$ (since this LR may be large when 
a low probability action is chosen), and (ii) we use our $Q$-baseline instead 
of a $V$ baseline, reducing further the variance of the LR estimate.

\subsection{Distributional Retrace}\label{sec:dist_retrace}
\begin{figure}
\centering
\includegraphics[width=1.0\textwidth]{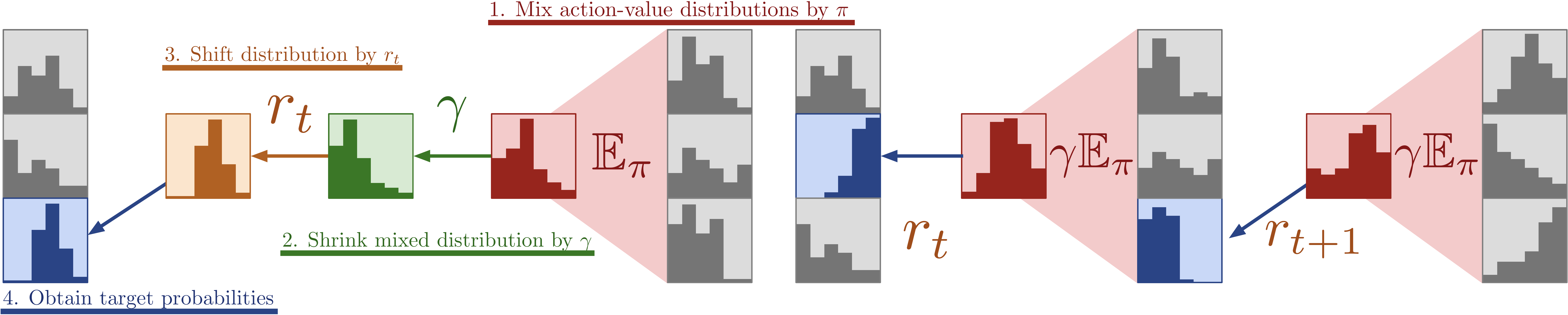}
  \caption{Single-step (left) and multi-step (right) distribution bootstrapping.}
  \label{distributions-multi-step}
\end{figure}
In off-policy learning it is very difficult to produce an 
unbiased sample $R(\action)$ of $Q^{\pi}(\action)$ when following another policy $\mu$. 
This would require using full importance sampling correction along the 
trajectory.  Instead, we use the off-policy corrected return computed 
by the Retrace algorithm, which produces a (biased) estimate of $Q^{\pi}(\action)$ 
but whose bias vanishes asymptotically \citep{munos2016safe}.

In Reactor, we consider predicting an 
approximation of the return distribution function from any state-action pair 
$(x,a)$ in a similar way as in \citet{bellemare2017distributional}. The 
original 
algorithm C51 described in that paper considered single-step Bellman updates 
only. Here we need to extend this idea to multi-step updates 
and handle the off-policy correction performed by the Retrace algorithm, as 
defined in
\eqref{eq:retrace}. Next, we describe these two extensions.

\paragraph{Multi-step distributional Bellman operator:} First, we extend C51 to 
multi-step Bellman backups. We consider return-distributions from $(x,a)$ of the form 
$\sum_{i} q_i(x,a) \delta_{z_i}$ (where $\delta_z$ denotes a Dirac in $z$) 
which are supported on a finite uniform grid $\{z_i\}\in [v_{\min}, 
v_{\max}]$, $z_i < z_{i+1}$, $z_{1} = v_{\min}$, $z_{m} = v_{\max}$. 
The coefficients $q_i(x,a)$ (discrete distribution) corresponds to the 
probabilities assigned to each atom 
$z_i$ of the grid. From an observed $n$-step sequence $\{x_t,a_t, r_t, x_{t+1}, 
\dots, x_{t+n}\}$, generated by behavior policy $\mu$ 
(i.e, $a_s\sim\mu(\cdot|x_s)$ for $t \leq s<t+n$), we build the 
$n$-step backed-up return-distribution from $(x_t, a_t)$. The $n$-step distributional 
Bellman target, whose expectation is $\sum_{s=t}^{t+n-1} \gamma^{s-t} r_s + \gamma^n Q(x_{t+n},a)$, 
is given by:
$$\sum_{i} q_i(x_{t+n},a) \delta_{z_i^{n}}, \mbox{ with } z^{n}_i = 
\sum_{s=t}^{t+n-1}\gamma^{s-t} r_s + \gamma^n z_i.$$
Since this distribution is supported on the set of atoms $\{z^{n}_i\}$, which is 
not necessarily aligned with the grid 
$\{z_i\}$, we do a projection step and minimize the KL-loss between the projected 
target and the current estimate, just as with C51 except with a different target distribution \citep{bellemare2017distributional}.

\paragraph{Distributional Retrace:} Now, the Retrace algorithm defined in 
\eqref{eq:retrace} involves an off-policy correction which is not handled by the 
previous $n$-step distributional Bellman backup. The key to extending this 
distributional back-up to off-policy learning is to rewrite the Retrace algorithm as a linear 
combination 
of $n$-step Bellman backups, weighted by some coefficients $\alpha_{n,a} $. 
Indeed, notice that \eqref{eq:retrace} rewrites as
\beqan
\Delta Q(x_t, a_t) = \sum_{n\geq 1} \sum_{a\in \mathcal{A}} \alpha_{n,a} \Big[ 
\underbrace{ \sum_{s=t}^{t+n-1} 
\gamma^{s-t} r_s + \gamma^n Q(x_{t+n},a)}_{n\mbox{\tiny -step Bellman backup}} 
\Big] - Q(x_t, a_t),
\eeqan
where $\alpha_{n,a} = \big(c_{t+1}\dots c_{t+n-1}\big) \big( \pi(a|x_{t+n}) - 
{\mathbb I}\{a=a_{t+n}\} c_{t+n}\big)$. These coefficients depend on the degree 
of off-policy-ness (between $\mu$ and $\pi$) along the trajectory. We have that 
$\sum_{n\geq 1}\sum_a \alpha_{n,a} = \sum_{n\geq 1} \big(c_{t+1}\dots c_{t+n-1}\big) (1 - 
c_{t+n}) = 1$, but notice some coefficients may be negative. However, in 
expectation (over the behavior policy) they are non-negative. Indeed, 
\beqan
\E_{\mu}[\alpha_{n,a}] &=& \E\Big[\big(c_{t+1}\dots c_{t+n-1}\big) 
\E_{a_{t+n}\sim\mu(\cdot|x_{t+n})}\big[ \pi(a|x_{t+n}) - {\mathbb 
I}\{a=a_{t+n}\}c_{t+n} |x_{t+n}\big] \Big] \\
&=& \E\Big[\big(c_{t+1}\dots c_{t+n-1}\big)  \Big( \pi(a|x_{t+n}) - 
\mu(a|x_{t+n}) \lambda \min\big(1, 
\frac{\pi(a|x_{t+n})}{\mu(a|x_{t+n})}\big)\Big) 
\Big]\geq 0,
\eeqan
by definition of the $c_s$ coefficients \eqref{eq:trace.cut}. 
Thus in expectation (over the behavior policy), the Retrace update can be seen 
as a {\em convex} combination of $n$-step Bellman updates.

Then, the distributional Retrace algorithm can be defined as backing up a {\em 
mixture} of $n$-step distributions. More precisely, 
we define the Retrace target distribution as:
$$\sum_{i=1} q^{*}_i(x_t,a_t) \delta_{z_i}, \mbox{ with } q^{*}_i(x_t,a_t) = 
\sum_{n\geq 1}\sum_a \alpha_{n,a} \sum_{j} q_j(x_{t+n},a_{t+n}) h_{z_i}(z^{n}_j),$$
where $h_{z_i}(x)$ is a linear interpolation kernel, projecting onto the support $\{ z_i \}$: 
$$h_{z_i}(x) = \left\{
  \begin{array}{lr}
   (x - z_{i-1}) / (z_{i} - z_{i-1}), & \text{ if } z_{i-1} \leq x \leq z_{i}\\
   (z_{i+1} - x) / (z_{i+1} - z_{i}), & \text{ if } z_{i} \leq x \leq z_{i+1}\\
    0, & \text{ if } x\leq z_{i-1} \text{ or } x \geq z_{i+1}\\
    1, & \text{ if } (x\leq v_{\min} \text{ and } z_i = v_{\min}) \text{ or } 
                    (x\geq v_{\max} \text{ and } z_i = v_{\max})\\
  \end{array}\right\}$$
We update the current 
probabilities $q(x_t, a_t)$ by performing a gradient step on the KL-loss 
\beq\label{eq:kl.gradient}
\nabla \mbox{KL}(q^{*}(x_t,a_t), q(x_t,a_t)) = -\sum_{i=1} 
q^{*}_i(x_t,a_t)\nabla 
\log q_i(x_t,a_t).
\eeq
Again, notice that some target ``probabilities'' $q^{*}_i(x_t,a_t)$ may be negative 
for some sample trajectory, but in expectation they will be non-negative. 
Since the gradient of a KL-loss is linear w.r.t.~its first argument, our update 
rule~\eqref{eq:kl.gradient} provides an unbiased estimate of the gradient of the 
KL between the expected (over the behavior policy) Retrace target distribution 
and the current predicted distribution.\footnote{We store past action probabilities $\mu$ together with actions taken in the replay memory.}

\paragraph{Remark:} The same method can be applied to other algorithms (such as 
TB($\lambda$) \citep{precup2000eligibility} and importance sampling 
\citep{precup01offpolicy}) in order to derive distributional versions of other
off-policy multi-step RL algorithms.  

\subsection{Prioritized sequence replay}\label{sec:prioritized_seq}
Prioritized experience replay has been shown to boost both statistical 
efficiency and final performance of deep RL agents 
\citep{schaul16prioritized}. However, as originally defined prioritized replay 
does not handle sequences of transitions and weights all unsampled transitions 
identically. In this section we present an alternative initialization strategy, 
called \textit{lazy initialization}, 
and argue that it better encodes prior information about temporal difference 
errors. We then briefly describe our computationally efficient prioritized sequence 
sampling algorithm, 
with full details left to the appendix.

It is widely recognized that TD errors tend to be temporally correlated, indeed 
the need to 
break this temporal correlation has been one of the primary justifications for 
the use of experience replay \citep{mnih15human}. Our proposed algorithm begins 
with this fundamental assumption.
\begin{assume}
Temporal differences are temporally correlated, with correlation decaying 
on average with the time-difference between two transitions.
\end{assume}
Prioritized experience replay adds new transitions to the replay buffer with a 
constant priority, but given the above assumption we can devise a better 
method. Specifically, we propose to add experience to the buffer with 
\textit{no 
priority}, inserting a priority only after the transition has been 
sampled and used for training. Also, instead of sampling transitions, we 
assign priorities to all (overlapping) sequences of length $n$. When sampling, 
sequences with an assigned 
priority are sampled proportionally to that priority. Sequences with no 
assigned 
priority are sampled proportionally to the average priority of 
assigned priority sequences within some local neighbourhood. Averages are 
weighted to compensate for sampling biases (i.e. more samples 
are made in areas of high estimated priorities, and in the absence of 
weighting this would lead to overestimation of unassigned priorities).

The \textit{lazy initialization} 
scheme starts with priorities $p_{t}$ corresponding to the sequences
$\{x_t, a_t, \ldots, x_{t+n}\}$ for which a priority was already 
assigned. Then it extrapolates a priority of all other sequences in the 
following way. Let us define a partition $(I_i)_i$ of the states ordered by 
increasing time such that each cell $I_i$ contains exactly one state $s_i$ with 
already assigned priority. We define the estimated priority $\hat p_{t}$ to 
all other sequences as $\hat p_{t} = 
\sum_{s_i \in J(t)} \frac{w_i} {\sum_{i'\in J(t)} w_{i'}} p(s_i)$, where 
$J(t)$ is a collection of contiguous cells $(I_i)$ containing time 
$t$, and $w_i= |I_{i}|$ is the length of the cell $I_{i}$ containing $s_i$.
For already defined priorities denote $\hat p_t=p_t$. 
%
Cell sizes work as estimates of inverse local density and are 
used as importance weights for priority estimation.
\footnote{Not to be confused with importance weights of produced samples.} 
For the algorithm to be unbiased, partition $(I_i)_i$ must 
{\bf not} be a function of the assigned priorities.
So far we have defined a class of algorithms all free to choose the 
partition $(I_i)$ and the collection of cells $I(t)$, as long that they satisfy 
the above constraints. Figure 
\ref{lazy-prioritization} in the Appendix illustrates the above description.

Now, with probability 
$\epsilon$ we sample uniformly at random, and with probability $1 - \epsilon$ 
we sample proportionally to $\hat p_{t}$.
We implemented an algorithm 
satisfying the above constraints and called it \textit{Contextual Priority 
Tree} (CPT). It is based on AVL trees 
\citep{velskii1976avl} and can execute sampling, insertion, deletion and 
density evaluation in $O(\ln (n))$ time. We describe CPT in detail 
in the Appendix in Section \ref{app-tree-prioritized}.

We treated prioritization as purely a variance reduction technique. 
Importance-sampling weights were evaluated as in prioritized 
experience replay, with fixed $\beta = 1$ in (\ref{eq-importance}). We 
used simple gradient magnitude estimates as priorities, corresponding to a mean 
absolute TD error along a sequence for Retrace, as defined in (\ref{eq:retrace}) 
for the classical RL case, and total variation 
in the distributional Retrace case.\footnote{Sum of absolute discrete probability differences.}

\subsection{Agent architecture}\label{sec:agent_arch}
\begin{figure}[t]
\centering
\vspace{0pt}
\begin{minipage}{.35\textwidth}
\hspace{-20px}  \includegraphics[width=1.15\textwidth]{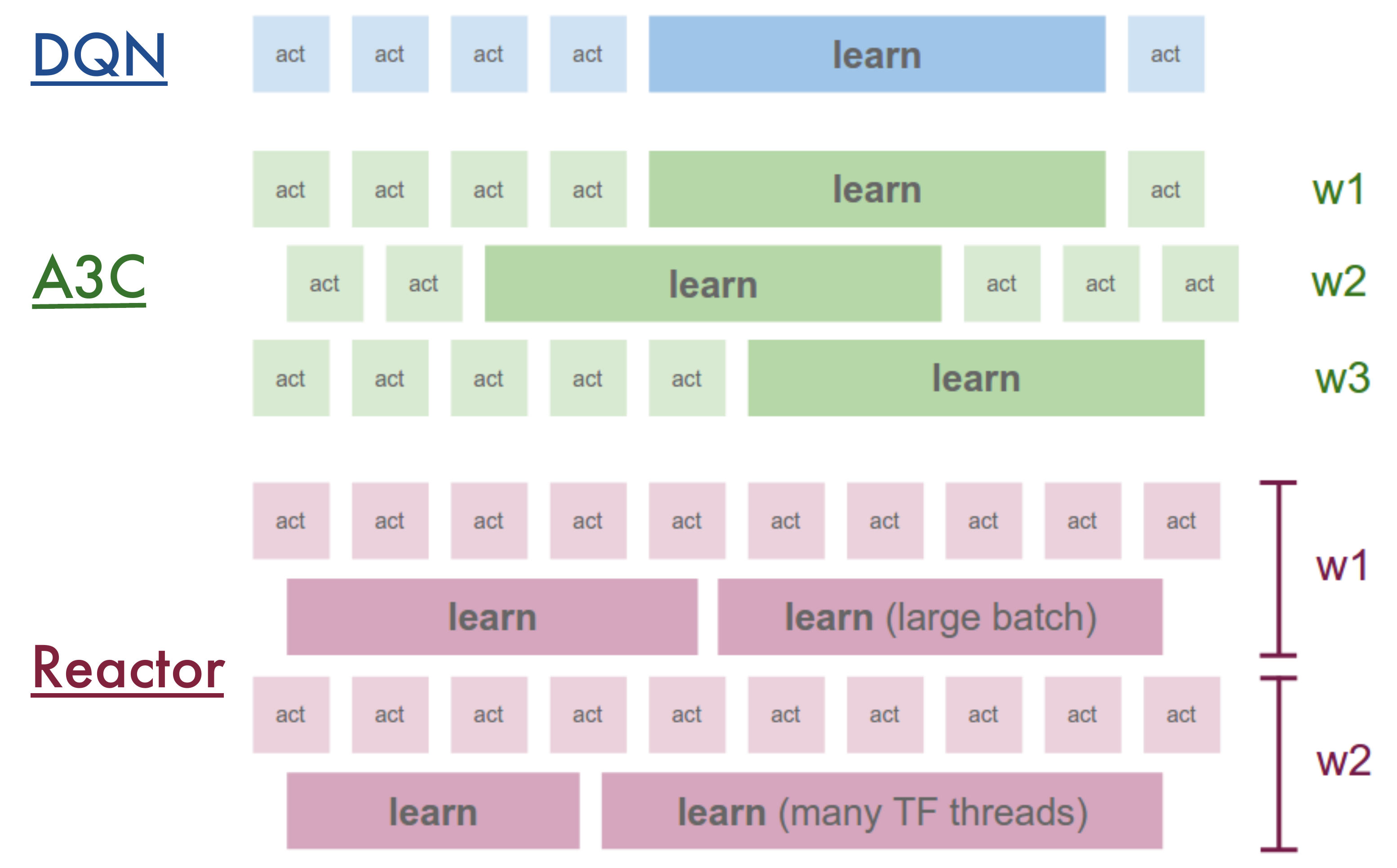}
\end{minipage}
\begin{minipage}{.01\textwidth}
\hspace{2px}
\end{minipage}
\vspace{0pt}
\begin{minipage}{.55\textwidth}
\begin{tabular}{| l | c | c | l |}
\hline
Algorithm & Training Time & Type & \# Workers \\
\hline
\hline
\cellcolor{dqn_color} DQN & 8 days & GPU & 1  \\
\cellcolor{dqn_color}  Double DQN & 8 days & GPU & 1  \\
\cellcolor{dqn_color} Dueling & 8 days & GPU & 1  \\
\cellcolor{dqn_color} Prioritized DQN & 8 days & GPU & 1  \\
\cellcolor{dqn_color} Rainbow & 10 days & GPU & 1  \\
\cellcolor{ac_color} A3C & 4 days & CPU & 16 \\
\cellcolor{reactor_color} Reactor & {\bf < 2 days} & CPU & 10+1 \\
\cellcolor{reactor_color} Reactor 500m & 4 days & CPU & 10+1 \\
\cellcolor{reactor_color} Reactor* & {\bf < 1 day} & CPU & 20+1 \\
\hline
\end{tabular}
\end{minipage}%
  \caption{\small (Left) The model of parallelism of DQN, A3C and Reactor 
    architectures. Each row represents a separate thread. In Reactor's case,
    each worker, consiting of a learner and an actor is run on a separate 
    worker machine. (Right) Comparison of training times and resources for 
    various algorithms. 500m denotes 500 million training frames; otherwise 
    200m training frames were used.}
  \label{parallelism}
\end{figure}

In order to improve CPU utilization we decoupled acting from learning. 
This is an important aspect of our architecture: an {\em acting thread} 
receives observations, submits actions to the environment, and stores 
transitions in memory, while a {\em learning thread} re-samples 
sequences of experiences from memory and trains on them (Figure 
\ref{parallelism}, left). We typically execute 4-6 acting steps per each 
learning step. We sample sequences of length $n=33$ in batches of 4. A moving 
network is unrolled over frames 1-32 while the target network is unrolled over 
frames 2-33.

We allow the agent to be distributed over 
multiple machines each containing action-learner pairs. Each worker downloads the 
newest network parameters before each learning step and sends delta-updates at 
the end of it. Both the network and target 
network are stored on a 
shared parameter server while each machine contains its own local replay 
memory. 
Training is done by downloading a shared network, evaluating local gradients 
and 
sending them to be applied on the shared network. While the agent can also be 
trained on a single machine, in this work we present results of 
training obtained with either 10 or 20 actor-learner workers and one parameter server.
In Figure~\ref{parallelism} (right) we compare resources and runtimes of 
Reactor with related algorithms.\footnote{All results are reported with respect 
to the combined total number of observations obtained over all worker machines.}



\subsubsection{Network architecture}\label{sec:net_arch}
In some domains, such as Atari, it is useful to base decisions on a 
short history of past observations. The two techniques generally used to 
achieve this are frame stacking and recurrent network architectures. We 
chose the latter over the former for reasons 
of implementation simplicity and computational 
efficiency. As the Retrace algorithm requires evaluating action-values over contiguous 
sequences of trajectories, using a recurrent architecture allowed each frame to 
be processed by the convolutional network only once, as opposed to $n$ times 
times if $n$ frame concatenations were used. 

The Reactor architecture uses a recurrent neural network which 
takes an observation $x_t$ as input and produces two outputs:  
categorical action-value distributions $q_i(x_t, a)$ ($i$ here is a bin 
identifier), and policy probabilities $\pi(a|x_t)$. 
We use an architecture inspired by the duelling network architecture 
\citep{wang2015dueling}. We split action-value -distribution logits into 
state-value logits and advantage logits, which 
in turn are connected to the same LSTM network \citep{hochreiter1997long}.
Final action-value logits are produced by summing state- and action-specific 
logits, as in \citet{wang2015dueling}. Finally, a softmax layer on top for each action produces the distributions over discounted 
future returns.

The policy head uses a softmax layer mixed with a fixed 
uniform distribution over actions, where this mixing ratio is a 
hyperparameter \citep[Section 5.1.3]{wiering1999explorations}. Policy and 
Q-networks have separate LSTMs. 
Both LSTMs are connected to a shared 
linear layer which is connected to a shared convolutional neural network 
\citep{krizhevsky2012imagenet}. The precise network
specification is given in Table 
\ref{specs-table} in the Appendix.

Gradients coming from the policy LSTM are blocked and only gradients 
originating from the Q-network LSTM are allowed to back-propagate into the convolutional 
neural network. We block gradients 
from the policy head for increased stability, as this avoids positive feedback 
loops between $\pi$ and $q_i$ caused by shared representations. We used the Adam optimiser \citep{kingma2014adam}, with a learning rate of $5 \times 10^{-5}$ 
and zero momentum because asynchronous updates induce 
implicit momentum \citep{mitliagkas2016asynchrony}. 
Further discussion of hyperparameters and their optimization can be found in Appendix~\ref{sec:par_opt}.

\section{Experimental Results}\label{sec:results}

\begin{figure}
\centering
\vspace{0pt}
\begin{minipage}{.48\textwidth}
  \centerline{\includegraphics[width=1.0\textwidth]{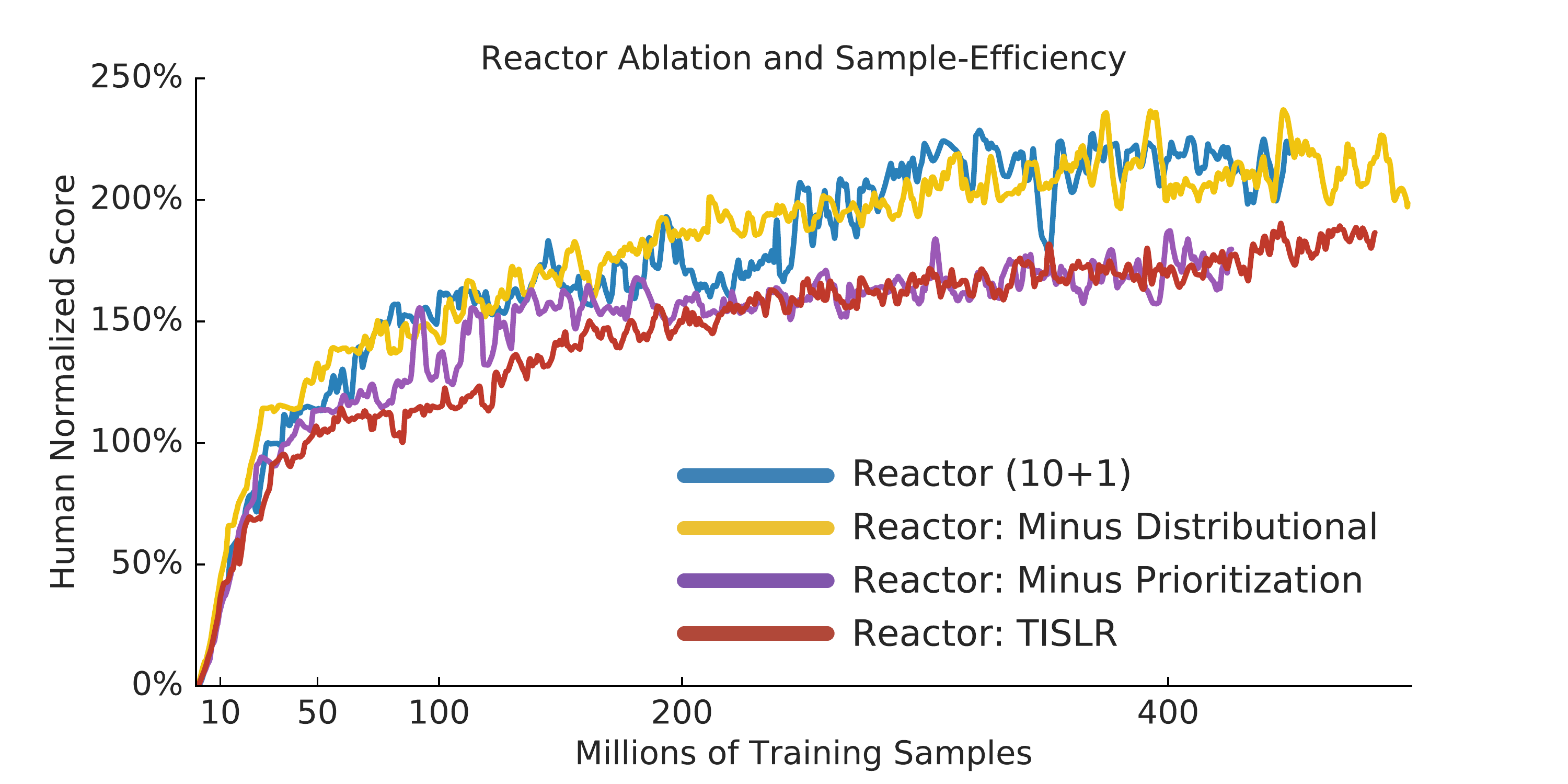}}
\end{minipage}%
\vspace{0pt}
\begin{minipage}{.48\textwidth}
\vspace{0pt}
  \centering
  \includegraphics[width=1.0\textwidth]{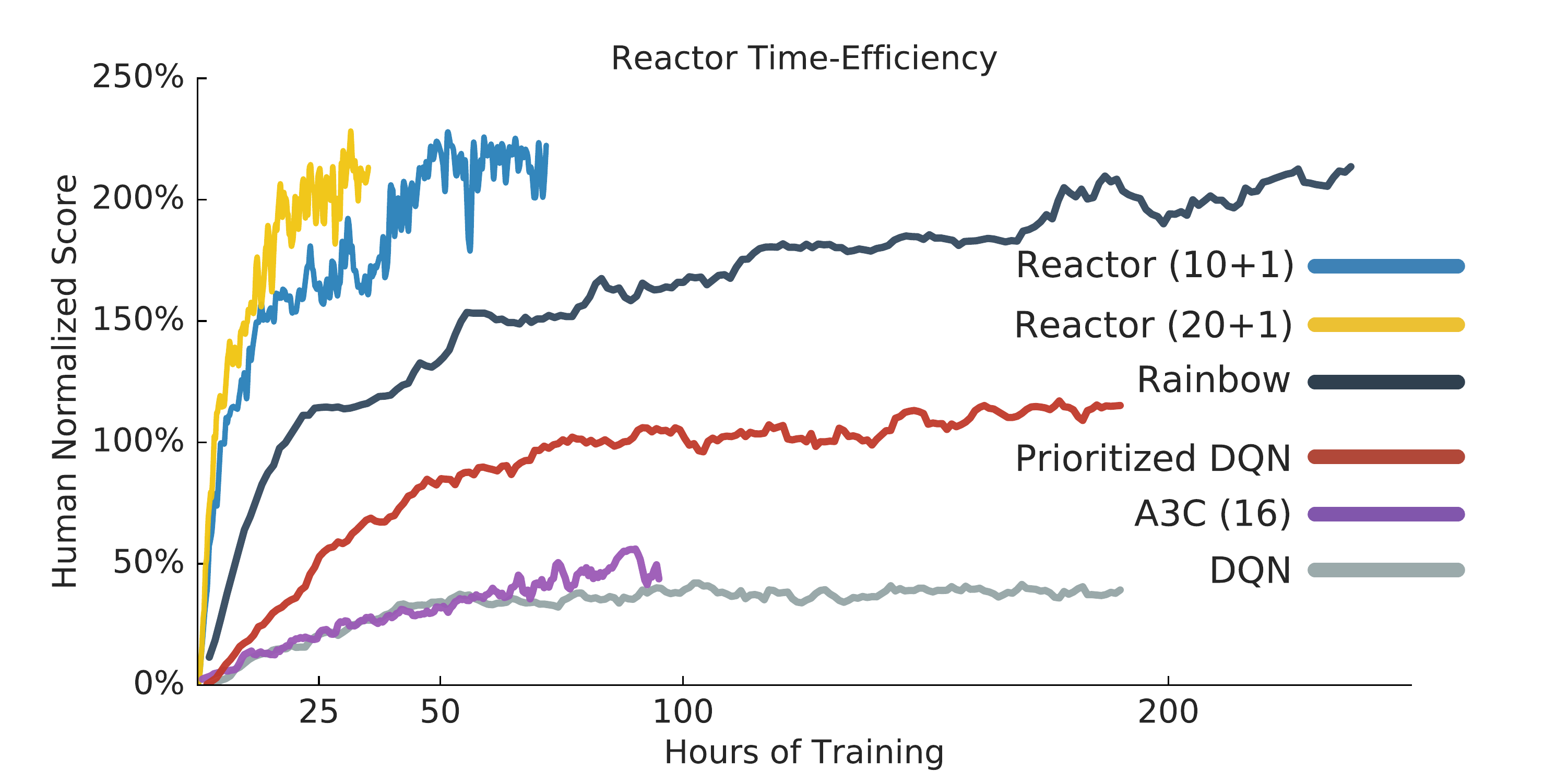}
\end{minipage}
  \caption{
  \small (Left) Reactor performance as various components are removed.
  (Right) Performance comparison as a function of training time in hours. Rainbow learning curve provided by \cite{rainbow}.}
  \label{learningcurves}
\end{figure}
We trained and evaluated Reactor on 57 Atari games \citep{bellemare2013arcade}. 
Figure \ref{learningcurves} compares the performance of Reactor with 
different versions of Reactor each time leaving one of the algorithmic 
improvements out. We can see that each of the algorithmic improvements 
(Distributional retrace, beta-LOO and prioritized replay) 
contributed to the final results. While prioritization was arguably the 
most important component, Beta-LOO clearly
outperformed TISLR algorithm. Although distributional and non-distributional 
versions performed similarly in terms of median human normalized scores, 
distributional version of the 
algorithm generalized better when tested with random human starts (Table 
\ref{comparison-table1}).

\begin{table}[ht]
\centering
\hspace{-3cm}
\begin{minipage}{.55\textwidth}
\begin{center}
\begin{small}
\begin{sc}
\begin{tabular}{| c | c | c | c |}
\hline
Algorithm & Normalized & Mean& Elo \\
 & scores & rank &\\
\hline
Random & 0.00 & 11.65 & -563\\
Human & 1.00 & 6.82 & 0\\
DQN & 0.69 & 9.05 & -172\\
DDQN & 1.11 & 7.63 & -58\\
Duel & 1.17 & 6.35 & 32\\
Prior & 1.13 & 6.63 & 13\\
Prior. Duel. & 1.15 & 6.25 & 40\\
A3C LSTM & 1.13 & 6.30 & 37\\
Rainbow & 1.53 & 4.18 & 186\\
Reactor ND $^{\ref{reactorfootnote}}$ & 1.51 & 4.98 & 126\\
Reactor & 1.65 & 4.58 & 156\\
Reactor 500m & {\bf 1.82} & {\bf 3.65} & {\bf 227}\\
\hline
\end{tabular}
\hfill
\end{sc}
\end{small}
\end{center}
\vspace{-0.2 in}
\caption{\small Random human starts}
\label{comparison-table1}
\end{minipage}
\begin{minipage}{.35\textwidth}
\begin{center}
\begin{small}
\begin{sc}
\hspace{2cm} \begin{tabular}{| c | c | c | c |}
\hline
Algorithm & Normalized & Mean& Elo \\
 & scores & rank &\\
\hline
Random & 0.00 & 10.93 & -673\\
Human & 1.00 & 6.89 & 0\\
DQN & 0.79 & 8.65 & -167\\
DDQN & 1.18 & 7.28 & -27\\
Duel & 1.51 & 5.19 & 143\\
Prior & 1.24 & 6.11 & 70\\
Prior. Duel. & 1.72 & 5.44 & 126\\
ACER$^{\ref{acerfootnote}}$ 500m & 1.9 & - & - \\
Rainbow & {\bf 2.31} & 3.63 & 270\\
Reactor ND $^{\ref{reactorfootnote}}$ & 1.80 & 4.53 & 195\\
Reactor & 1.87 & 4.46 & 196\\
Reactor 500m & 2.30 & {\bf 3.47} & {\bf 280}\\
\hline
\end{tabular}

\end{sc}
\hfill
\end{small}
\end{center}
\vspace{-0.2in}
\caption{\small 30 random no-op starts.}
\label{comparison-table3}
\end{minipage}
\end{table}
\subsection{Comparing to prior work}
We evaluated Reactor with target update frequency $T_{update}=1000$, 
$\lambda=1.0$ and $\beta$-LOO with $\beta=1$ on 
57 Atari games trained on 10 machines in parallel. 
We averaged scores over 200 episodes using 30 random human starts 
and noop starts (Tables \ref{raw-scores-human} and 
\ref{raw-scores-noop} in the Appendix). We calculated mean and 
median human normalised scores across all games. We also ranked 
all algorithms (including random and human scores) for each game and evaluated 
mean rank of each algorithm across all 57 Atari games. We also evaluated mean
Rank and Elo scores for each algorithm for both human and noop start settings. Please refer to Section 
\ref{rank-and-elo} in the Appendix for more details.

Tables \ref{comparison-table1} \& \ref{comparison-table3} compare versions of our algorithm,\footnote{
  \label{reactorfootnote}
  `ND` stands for a non-distributional (i.e. classical) version of Reactor 
  using Retrace \citep{munos2016safe}.
} with several other state-of-art algorithms across 57 Atari games for a 
fixed random seed across all games \citep{bellemare2013arcade}. 
We compare Reactor against are: DQN \citep{mnih15human}, Double 
DQN \citep{van2016deep}, DQN with prioritised experience replay 
\citep{schaul2015prioritized}, dueling architecture and prioritised dueling 
\citep{wang2015dueling}, ACER \citep{wang2017sample}, A3C \citep{mnih2016asynchronous}, and Rainbow \citep{rainbow}. 
Each algorithm was exposed to 200 million 
frames of experience, or 500 million frames when followed by $500\textsc{m}$, and the same pre-processing 
pipeline including 4 action repeats was used as in the original DQN paper \citep{mnih15human}. 

In Table \ref{comparison-table1}, we see that Reactor exceeds the performance of all algorithms across all metrics, 
despite requiring under two days of training. With 500 million frames and four days training we see Reactor's performance 
continue to improve significantly. The difference in time-efficiency is especially apparent when 
comparing Reactor and Rainbow (see Figure~\ref{learningcurves}, right).
Additionally, unlike Rainbow, Reactor does not use Noisy Networks 
\citep{fortunato2017noisy}, which was reported to have contributed 
to the performance gains. When evaluating under the no-op starts regime (Table \ref{comparison-table3}), 
Reactor out performs all methods except for Rainbow. This suggests that Rainbow is more sample-efficient
when training and evaluation regimes match exactly, but may be overfitting to particular trajectories due to 
the significant drop in performance when evaluated on the random human starts.

Regarding ACER, another Retrace-based actor-critic 
architecture, both classical and distributional versions of Reactor (Figure 
\ref{learningcurves}) exceeded the best 
reported median human normalized score of 1.9 with noop starts achieved in 500 
million steps.\footnote{
  \label{acerfootnote}
  Score for ACER in Table 2 was obtained from (Figure 1 in 
  \cite{wang2017sample}), but is not directly comparable due to the authors' 
  use of a cumulative maximization along each learning curve before taking the median.
}
\section{Conclusion}
In this work we presented a new off-policy agent based on Retrace actor-critic 
architecture and show that it achieves similar performance as the 
current state-of-the-art while giving significant real-time performance gains. 
We demonstrate the benefits of each of the suggested algorithmic improvements, 
including Distributional Retrace, beta-LOO policy gradient and contextual 
priority tree.

\bibliography{retrace_agent}
\bibliographystyle{iclr2018_conference}
\normalsize

\onecolumn

\section{Appendix}

\propbias*
\begin{proof}
The bias of $\hat G_{\mbox{\tiny $\beta$-LOO}}$ is
\beqan
\E[\hat G_{\mbox{\tiny $\beta$-LOO}}] - G &=& \sum_a \mu(a) [ \beta(a) (\E[R(a)] -Q(a)) ] \nabla\pi(a) + \sum_{a} Q(a) \nabla\pi(a) - G\\
&=& \sum_a (1-\mu(a)\beta(a)) [Q(a)-Q^{\pi}(a)] \nabla\pi(a)
\eeqan
\end{proof}

\subsection{Hyperparameter optimization}\label{sec:par_opt}
As we believe that algorithms should be robust with respect to the choice of 
hyperparameters, we spent little effort on parameter optimization. In total, we 
explored three distinct values of learning rates and two values of ADAM 
momentum (the default and zero) and two values of $T_{update}$ on a subset of 7 
Atari games without prioritization using non-distributional version of Reactor. 
We later used those values for all experiments. We did not optimize for batch 
sizes and sequence length or any prioritization hyperparamters.

\subsection{Rank and Elo evaluation}
\label{rank-and-elo}

Commonly used mean and median human normalized scores have several 
disadvantages. A mean human normalized score implicitly puts more weight on 
games that computers are good and humans are bad at. Comparing algorithm by a 
mean human normalized score across 57 Atari games is almost equivalent to 
comparing algorithms on a small subset of games close to the median 
and thus dominating the signal.
Typically a set of ten most score-generous games, namely Assault, Asterix, 
Breakout, Demon Attack, Double Dunk, Gopher, Pheonix, Stargunner, Up'n Down and 
Video Pinball can explain more than half of inter-algorithm variance. A median 
human normalized score has the opposite disadvantage by effectively discarding 
very easy and very hard games from the comparison. As typical median human 
normalized scores are within the range of 1-2.5, an algorithm which scores zero 
points on Montezuma's Revenge is evaluated equal to the one which scores 2500 
points, as both performance levels are still below human performance making 
incremental improvements on hard games not being reflected in the overall 
evaluation. In order to address both problem, we also evaluated mean rank and 
Elo metrics for inter-algorithm comparison. Those metrics implicitly assign the 
same weight to each game, and as a result is more sensitive of relative 
performance on very 
hard and easy games: swapping scores of two algorithms on any game would 
result in the change of both mean rank and Elo metrics.

We calculated separate mean rank and Elo scores for each algorithm using 
results of test evaluations with 30 random noop-starts and 30 random human 
starts (Tables \ref{raw-scores-noop} and \ref{raw-scores-human}).
All algorithms were ranked across each game separately, and a mean rank 
was evaluated across 57 Atari games. For Elo score 
evaluation algorithm, $A$ was considered to win over algorithm $B$ if 
it obtained more scores on a given Atari. We produced an empirical 
win-probability matrix by summing wins across all games and used this matrix to 
evaluate Elo scores. A ranking difference of 400 corresponds to the odds 
of winning of 10:1 under the Gaussian assumption.

\subsection{Contextual priority tree}
\label{app-tree-prioritized}

\begin{figure}
\centering
\begin{minipage}{1.0\textwidth}  
  \centering
  \includegraphics[width=0.6\textwidth]{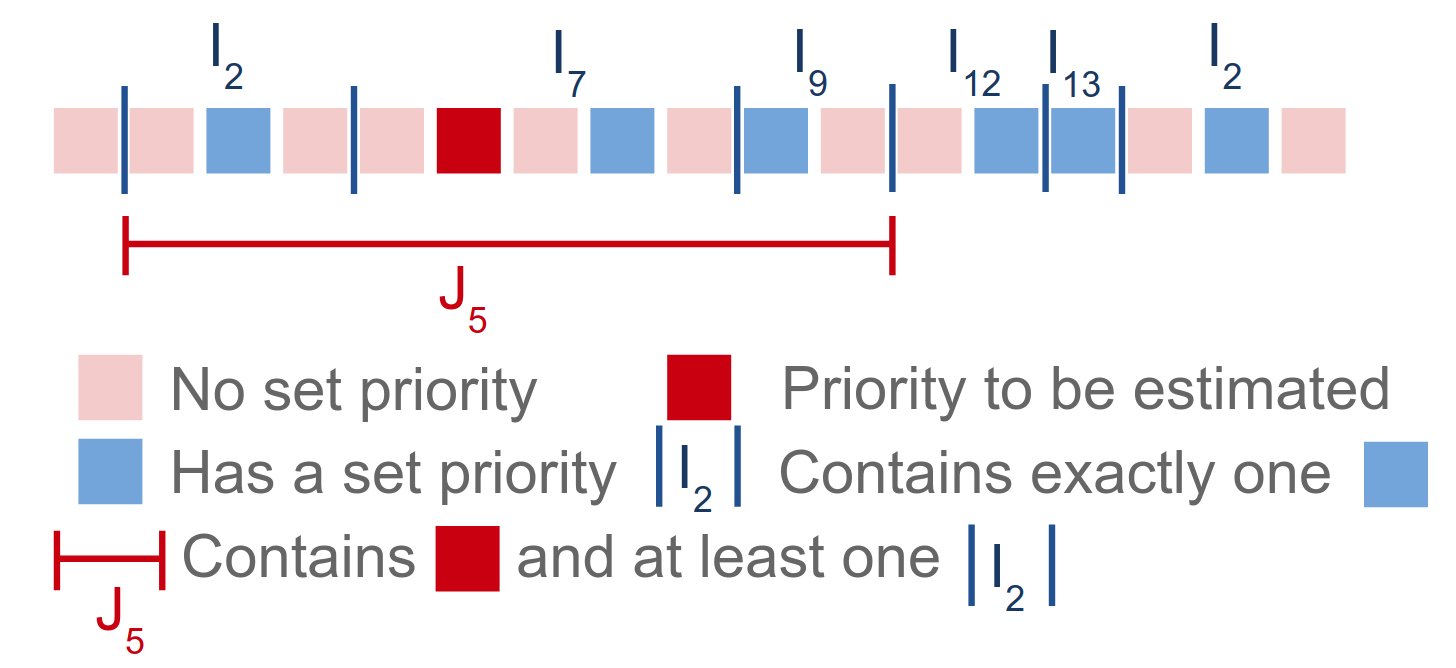}
  \caption{Illustration of Lazy prioritization, where sequences with no 
    explicitly assigned priorities get priorities estimated by a linear 
    combination of nearby assigned priorities. Exact boundaries of blue 
    and red intervals are arbitrary (as long as all conditions described in 
    Section \ref{sec:prioritized_seq} are satisfied) thus leading to many 
    possible algorithms. Each square represents an individual sequence of 
    size 32 (sequences overlap).
    Inverse sizes of blue regions work as local density estimates allowing to 
    produce unbiased priority estimates.}
  \label{lazy-prioritization}
\end{minipage}%
\end{figure}

\begin{figure}
\centering
\begin{minipage}{1.0\textwidth}  
  \centering
  \includegraphics[width=1.0\textwidth]{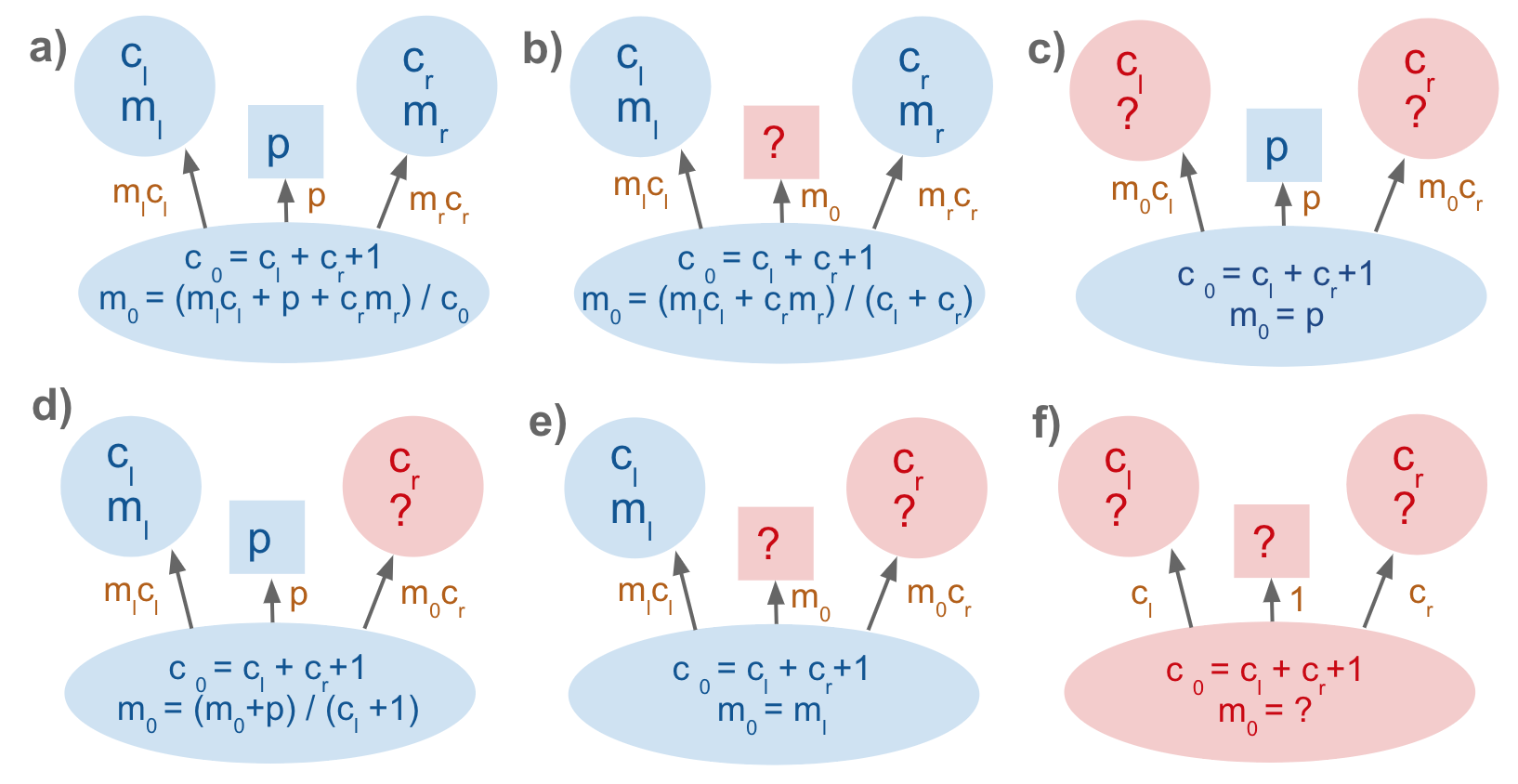}
  \caption{Rules used to evaluate summary statistics on each node of a binary 
    search tree where all sequence keys are kept sorted by temporal order. 
    $c_l$ and $c_r$ are total number of nodes within left and right subtrees. 
    $m_l$ and $m_l$ are estimated mean priorities per node within the subtree. 
    A central square node corresponds to a single key stored within the parent 
    node with its corresponding priority of $p$ (if set) or $?$ if not set. 
    Red subtrees do not have any singe child with a set priority, and a result 
    do not have priority estimates. A red square shows that priority of the key 
    stored within the parent node is not known. Unknown mean priorities is 
    marked by a question mark. Empty child nodes simply behave as if $c=0$ with 
    $p=?$. Rules a-f illustrate how mean values are propagated down from 
    children to parents when priorities are only partially known (rules d and e 
    also apply symmetrically). Sampling is done by going from the root node up 
    the tree by selecting one of the children (or the current key) 
    stochastically proportional to orange proportions. Sampling terminates once 
    the current (square) key is chosen.}
  \label{priority-rule}
\end{minipage}%
\end{figure}

\begin{figure}
\centering
\begin{minipage}{1.0\textwidth}  
  \centering
  \includegraphics[width=0.5\textwidth]{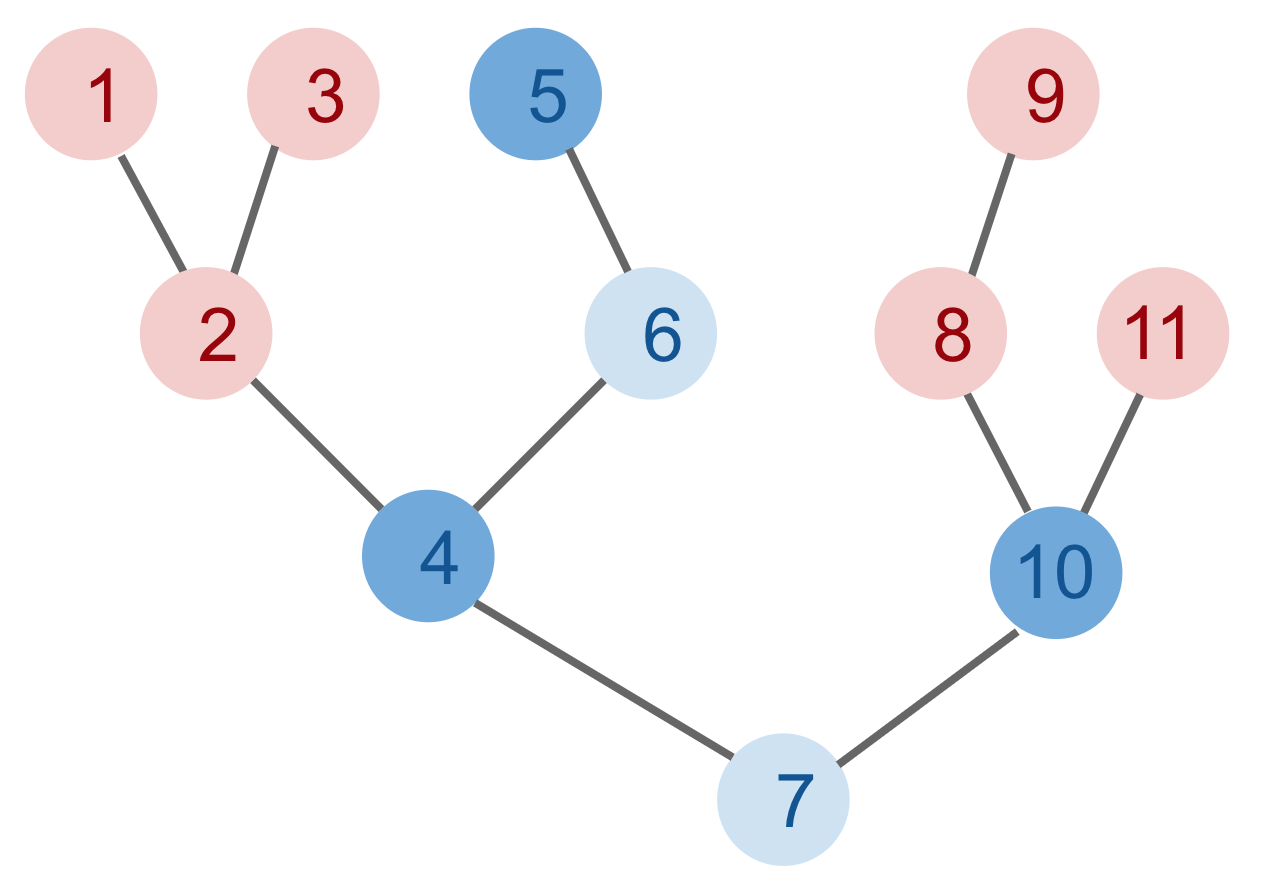}
  \caption{Example of a balanced priority tree. Dark blue nodes contain keys 
    with known priorities, light blue nodes have at least one child with at 
    least a single known priority, while ping nodes do not have any priority 
    estimates. Nodes 1, 2 and 3 will obtain priority estimates equal to $2/3$ 
    of the priority of key 5 and $1/3$ of the priority of node 4. This implies 
    that estimated priorities of keys 1, 2 and 3 are implicitly defined by keys 
    4 and 6. Nodes 8, 9 and 11 are estimated to have the same priority as node 
    10.}
  \label{priority-tree}
\end{minipage}%
\end{figure}

Contextual priority tree is one possible implementation of lazy 
prioritization (Figure \ref{lazy-prioritization}). All sequence 
keys are put into a balanced binary search tree which maintains a temporal 
order. An AVL tree (\cite{velskii1976avl}) was chosen due to the ease of 
implementation and because it is on average more evenly balanced than a 
Red-Black Tree.

Each tree node has up to two children (left and right) and contains currently 
stored key 
and a priority of the key which is either set or is unknown. Some trees may 
only 
have a single child subtree while some may have none. In addition to this 
information, we were tracking other summary statistics at each node which was 
re-evaluated after each tree rotation. The summary statistics was evaluated by 
consuming previously evaluated summary statistics of both children and a 
priority of the key stored within the current node. In particular, we were 
tracking a total number of nodes within each subtree and mean-priority 
estimates 
updated according to rules shown in Figure \ref{priority-rule}. The total 
number of 
nodes within each subtree was always known ($c$ in Figure \ref{priority-rule}), 
while mean priority estimates per key ($m$ in Figure \ref{priority-rule}) could 
either be known or unknown. 

If a mean priority of either one child subtree or a key 
stored within the current node is unknown then it can be estimated to by 
exploiting information coming from another sibling subtree or a priority 
stored within the parent node.

Sampling was done by traversing the tree from the root node up while sampling 
either one of the children subtrees or the currently held key proportionally to 
the total estimated priority masses contained within. The rules used to 
evaluate proportions are shown in orange in Figure 
\ref{priority-rule}. Similarly, probabilities of arbitrary keys can be queried 
by traversing the tree from the root node towards the child node of an interest 
while maintaining a product of probabilities at each branching point. 
Insertion, 
deletion, sampling and probability query operations can be done in O(ln(n)) 
time.

The suggested algorithm has the desired property that it becomes a simple 
proportional sampling algorithm once all the priorities are known. While 
some key priorities are unknown, they are estimated by using nearby known 
key priorities (Figure \ref{priority-tree}).

Each time when a new sequence key is added to the tree, it was set to have an 
unknown priority. Any priority was assigned only after the key got first 
sampled and the corresponding sequence got passed through the learner. When a 
priority of a key is set or updated, the key node is deliberately removed from 
and placed back to the tree in order to become a leaf-node. This helped to 
set priorities of nodes in the immediate vicinity more accurately by using the 
freshest information available.

\subsection{Network architecture}
The value of 
$\epsilon=0.01$ is the minimum probability of choosing a random action and it 
is hard-coded into the policy network. Figure \ref{netarch} shows the overall 
network topology while Table \ref{specs-table} specifies network layer sizes.
\newpage
\begin{figure}[ht]
\centering
\includegraphics[width=0.5\textwidth]{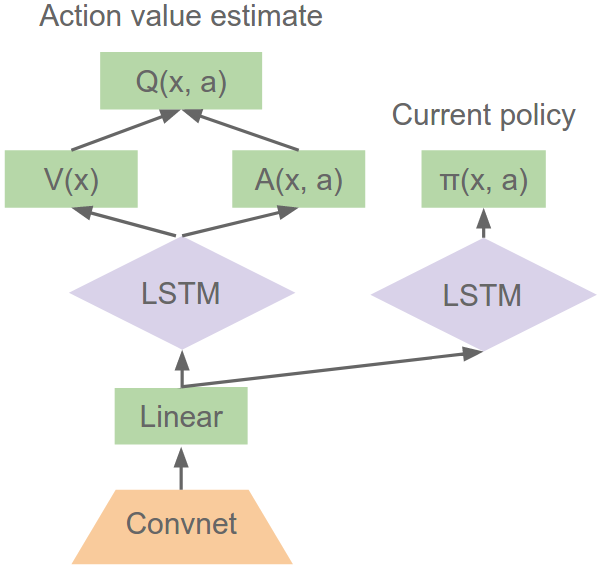}
  \caption{Network architecture.}
  \label{netarch}
\end{figure}
\begin{table}[ht]
\caption{Specification of the neural network used (illustrated in Figure 
\ref{netarch})}
\label{specs-table}
\vskip 0.1in
\begin{center}
\begin{sc}
\begin{tabular}{| c | c | c | c | c |}
\hline
Layer & Input & \multicolumn{3}{|c|}{ Parameters} \\
      & size  & \multicolumn{3}{|c|}{ } \\
\hline
Convolutional & & kernel & output & strides\\
              & & width & channels & \\
\hline
Conv 1 & [84, 84, 1] & [8, 8] & 16 & 4 \\
ConcatReLU & [20, 20, 16] & & & \\
Conv 2 & [20, 20, 32] & [4, 4] & 32 & 2 \\
ConcatReLU & [9, 9, 32] & & & \\
Conv 3 & [9, 9, 64] & [3, 3] & 32 & 1 \\
ConcatReLU & [7, 7, 32] & & & \\
\hline
Fully connected & & \multicolumn{3}{|c|}{ Output size} \\
\hline
Linear & [7, 7, 64] & \multicolumn{3}{|c|}{ 128 }\\
ConcatReLU & [128] & \multicolumn{3}{|c|}{} \\
\hline
Recurrent & & \multicolumn{3}{|c|}{ Output size} \\
$\pi$ & & \multicolumn{3}{|c|}{ } \\
\hline
LSTM & [256] & \multicolumn{3}{|c|}{ 128 }\\
Linear & [128] & \multicolumn{3}{|c|}{ 32 }\\
ConcatReLU & [32] & \multicolumn{3}{|c|}{ }\\
Linear & [64] & \multicolumn{3}{|c|}{ \#actions }\\
Softmax & [\#actions] & \multicolumn{3}{|c|}{ \#actions }\\
x(1-$\epsilon$)+$\epsilon$/\#actions & [\#actions] & \multicolumn{3}{|c|}{ 
\#actions }\\
\hline
Recurrent $Q$ & & \multicolumn{3}{|c|}{ Output size} \\
\hline
LSTM & [256] & \multicolumn{3}{|c|}{ 128 }\\
\hline
Value logit head & & \multicolumn{3}{|c|}{  Output size} \\
\hline
Linear & [128] & \multicolumn{3}{|c|}{ 32 }\\
ConcatReLU & [32] & \multicolumn{3}{|c|}{ }\\
Linear & [64] & \multicolumn{3}{|c|}{ \#bins }\\
\hline
Advantage logit head & & \multicolumn{3}{|c|}{ \#actions $\times$ \#bins} \\
\hline
Linear & [128] & \multicolumn{3}{|c|}{ 32 }\\
ConcatReLU & [32] & \multicolumn{3}{|c|}{ }\\
\hline
\end{tabular}
\end{sc}
\end{center}
\vskip -0.1in
\end{table}
\newpage
\subsection{Comparisons with Rainbow}
In this section we compare Reactor with the recently published Rainbow agent 
\citep{rainbow}. While ACER is the most closely related algorithmically, Rainbow 
is most closely related in terms of performance and thus a deeper understanding 
of the trade-offs between Rainbow and Reactor may benefit interested readers. 
There are many architectural and algorithmic differences between Rainbow and 
Reactor. We will therefore begin by highlighting where they agree. Both use a 
categorical action-value distribution critic 
\citep{bellemare2017distributional}, factored into state and state-action logits 
\citep{wang2015dueling},
\begin{equation*}
q_i(x, a) = \frac{l_i(x, a)}{\sum_j l_j(x, a)},\ \quad l_i(x, a) = l_i(x) + l_i(x, a) - \frac{1}{|\mathcal{A}|} \sum_{b \in \mathcal{A}} l_i(x, b).
\end{equation*}
Both use prioritized replay, and finally, both perform $n$-step Bellman updates.

Despite these similarities, Reactor and Rainbow are fundamentally different 
algorithms and are based upon different lines of research. While Rainbow uses 
Q-Learning and is based upon DQN \citep{mnih15human}, Reactor is an actor-critic 
algorithm most closely based upon A3C \citep{mnih2016asynchronous}. Each 
inherits some design choices from their predecessors, and we have not performed 
an extensive ablation comparing these various differences. Instead, we will 
discuss four of the differences we believe are important but less obvious.

First, the network structures are substantially different. Rainbow uses noisy 
linear layers and ReLU activations throughout the network, whereas Reactor uses 
standard linear layers and concatenated ReLU activations throughout. To overcome 
partial observability, Rainbow, inheriting this choice from DQN, uses 
\textit{frame stacking}. On the other hand, Reactor, inheriting its choice from 
A3C, uses LSTMs after the convolutional layers of the network. It is also 
difficult to directly compare the number of parameters in each network because 
the use of noisy linear layers doubles the number of parameters, although half 
of these are used to control noise, while the LSTM units in Reactor require more 
parameters than a corresponding linear layer would.

Second, both algorithms perform $n$-step updates, however, the Rainbow $n$-step 
update does not use any form of off-policy correction. Because of this, Rainbow 
is restricted to using only small values of $n$ (e.g. $n = 3$) because larger 
values would make sequences more off-policy and hurt performance. By comparison, 
Reactor uses our proposed distributional Retrace algorithm for off-policy 
correction of $n$-step updates. This allows the use of larger values of $n$ 
(e.g. $n = 33$) without loss of performance.

Third, while both agents use prioritized replay buffers 
\citep{schaul16prioritized}, they each store different information and 
prioritize using different algorithms. Rainbow stores a tuple containing the 
state $x_{t-1}$, action $a_{t-1}$, sum of $n$ discounted rewards 
$\sum_{k=0}^{n-1} r_{t+k} \prod_{m=0}^{k-1} \gamma_{t+m}$, product of $n$ 
discount factors $\prod_{k=0}^{n-1} \gamma_{t+k}$, and next-state $n$ steps away 
$x_{t+n-1}$. Tuples are prioritized based upon the last observed TD error, and 
inserted into replay with a maximum priority. Reactor stores length $n$ 
sequences of tuples $(x_{t-1}, a_{t-1}, r_t, \gamma_t)$ and also prioritizes 
based upon the observed TD error. However, when inserted into the buffer the 
priority is instead inferred based upon the known priorities of neighboring 
sequences. This priority inference was made efficient using the previously 
introduced contextual priority tree, and anecdotally we have seen it improve 
performance over a simple maximum priority approach.

Finally, the two algorithms have different approaches to exploration. 
Rainbow, unlike DQN, does not use $\epsilon$-greedy exploration, but instead 
replaces all linear layers with noisy linear layers which induce randomness 
throughout the network. This method, called Noisy Networks 
\citep{fortunato2017noisy}, creates an adaptive exploration integrated into the 
agent's network. Reactor does not use noisy networks, but instead uses the same 
entropy cost method used by A3C and many others \citep{mnih2016asynchronous}, 
which penalizes deterministic policies thus encouraging indifference between 
similarly valued actions. Because Rainbow can essentially learn not to explore, 
it may learn to become entirely greedy in the early parts of the episode, while 
still exploring in states not as frequently seen. In some sense, this is 
precisely what we want from an exploration technique, but it may also lead to 
highly deterministic trajectories in the early part of the episode and an 
increase in overfitting to those trajectories. We hypothesize that this may be 
the explanation for the significant difference in Rainbow's performance between 
evaluation under no-op and random human starts, and why Reactor does not show 
such a large difference.

\subsection{Atari results}

\begin{table}[h]
\caption{Scores for each game evaluated with 30 random human starts. Reactor 
was evaluated by averaging scores over 200 episodes. All scores (except for 
Reactor) were taken from \cite{wang2015dueling}, \cite{mnih2016asynchronous} 
and 
\cite{rainbow}.}
\label{raw-scores-human}
\begin{scriptsize}
\begin{sc}
\tiny
\include{raw-scores}
\end{sc}
\end{scriptsize}
\end{table}

\begin{table}[h]
\caption{Scores for each game evaluated with 30 random noop starts. Reactor 
was evaluated by averaging scores over 200 episodes. All scores (except for 
Reactor) were taken from \cite{wang2015dueling} and \cite{rainbow}.}
\label{raw-scores-noop}
\begin{scriptsize}
\begin{sc}
\tiny
\hspace{-40px}\begin{tabular}{| p{2.4cm} p{0.84cm} p{0.84cm} p{0.84cm} p{0.84cm} p{0.84cm} p{0.84cm} p{0.84cm} p{0.84cm} p{0.84cm} p{0.84cm} p{0.84cm} |}
\hline
Game \/ Agent  & Random & Human & DQN & DDQN & Duel & Prior & Prior. Duel. & Rainbow & Reactor ND $^{\ref{reactorfootnote}}$ & Reactor & Reactor 500m\\
\hline
Alien & 227.8 & 7127.7 & 1620.0 & 3747.7 & 4461.4 & 4203.8 & 3941.0 & 9491.7 & 4199.4 & 6482.1 & {\bf 12689.1}\\
Amidar & 5.8 & 1719.5 & 978.0 & 1793.3 & 2354.5 & 1838.9 & 2296.8 & {\bf 5131.2} & 1546.8 & 833.0 & 1015.8\\
Assault & 222.4 & 742.0 & 4280.4 & 5393.2 & 4621.0 & 7672.1 & 11477.0 & 14198.5 & {\bf 17543.8} & 11013.5 & 8323.3\\
Asterix & 210.0 & 8503.3 & 4359.0 & 17356.5 & 28188.0 & 31527.0 & 375080.0 & {\bf 428200.3} & 16121.0 & 36238.5 & 205914.0\\
Asteroids & 719.1 & {\bf 47388.7} & 1364.5 & 734.7 & 2837.7 & 2654.3 & 1192.7 & 2712.8 & 4467.4 & 2780.4 & 3726.1\\
Atlantis & 12850.0 & 29028.1 & 279987.0 & 106056.0 & 382572.0 & 357324.0 & 395762.0 & 826659.5 & {\bf 968179.5} & 308258.0 & 302831.0\\
Bank Heist & 14.2 & 753.1 & 455.0 & 1030.6 & {\bf 1611.9} & 1054.6 & 1503.1 & 1358.0 & 1236.8 & 988.7 & 1259.7\\
Battlezone & 2360.0 & 37187.5 & 29900.0 & 31700.0 & 37150.0 & 31530.0 & 35520.0 & 62010.0 & {\bf 98235.0} & 61220.0 & 64070.0\\
Beam Rider & 363.9 & 16926.5 & 8627.5 & 13772.8 & 12164.0 & 23384.2 & {\bf 30276.5} & 16850.2 & 8811.8 & 8566.5 & 11033.4\\
Berzerk & 123.7 & 2630.4 & 585.6 & 1225.4 & 1472.6 & 1305.6 & {\bf 3409.0} & 2545.6 & 1515.7 & 1641.4 & 2303.1\\
Bowling & 23.1 & {\bf 160.7} & 50.4 & 68.1 & 65.5 & 47.9 & 46.7 & 30.0 & 59.3 & 75.4 & 81.0\\
Boxing & 0.1 & 12.1 & 88.0 & 91.6 & 99.4 & 95.6 & 98.9 & 99.6 & {\bf 99.7} & 99.4 & 99.4\\
Breakout & 1.7 & 30.5 & 385.5 & 418.5 & 345.3 & 373.9 & 366.0 & 417.5 & 509.5 & {\bf 518.4} & 514.8\\
Centipede & 2090.9 & {\bf 12017.0} & 4657.7 & 5409.4 & 7561.4 & 4463.2 & 7687.5 & 8167.3 & 7267.2 & 3402.8 & 3422.0\\
Chopper Command & 811.0 & 7387.8 & 6126.0 & 5809.0 & 11215.0 & 8600.0 & 13185.0 & 16654.0 & 19901.5 & 37568.0 & {\bf 107779.0}\\
Crazy Climber & 10780.5 & 35829.4 & 110763.0 & 117282.0 & 143570.0 & 141161.0 & 162224.0 & 168788.5 & 173274.0 & 194347.0 & {\bf 236422.0}\\
Defender & 2874.5 & 18688.9 & 23633.0 & 35338.5 & 42214.0 & 31286.5 & 41324.5 & 55105.0 & 181074.3 & 113128.0 & {\bf 223025.0}\\
Demon Attack & 152.1 & 1971.0 & 12149.4 & 58044.2 & 60813.3 & 71846.4 & 72878.6 & 111185.2 & {\bf 122782.5} & 100189.0 & 115154.0\\
Double Dunk & -18.6 & -16.4 & -6.6 & -5.5 & 0.1 & 18.5 & -12.5 & -0.3 & 23.0 & 11.4 & {\bf 23.0}\\
Enduro & 0.0 & 860.5 & 729.0 & 1211.8 & 2258.2 & 2093.0 & {\bf 2306.4} & 2125.9 & 2211.3 & 2230.1 & 2224.2\\
Fishing Derby & -91.7 & -38.7 & -4.9 & 15.5 & {\bf 46.4} & 39.5 & 41.3 & 31.3 & 33.1 & 23.2 & 30.4\\
Freeway & 0.0 & 29.6 & 30.8 & 33.3 & 0.0 & 33.7 & 33.0 & {\bf 34.0} & 22.3 & 31.4 & 31.5\\
Frostbite & 65.2 & 4334.7 & 797.4 & 1683.3 & 4672.8 & 4380.1 & 7413.0 & {\bf 9590.5} & 7136.7 & 8042.1 & 7932.2\\
Gopher & 257.6 & 2412.5 & 8777.4 & 14840.8 & 15718.4 & 32487.2 & {\bf 104368.2} & 70354.6 & 36279.1 & 69135.1 & 89851.0\\
Gravitar & 173.0 & {\bf 3351.4} & 473.0 & 412.0 & 588.0 & 548.5 & 238.0 & 1419.3 & 1804.8 & 1073.8 & 2041.8\\
H.E.R.O. & 1027.0 & 30826.4 & 20437.8 & 20130.2 & 20818.2 & 23037.7 & 21036.5 & {\bf 55887.4} & 27833.0 & 35542.2 & 43360.4\\
Ice Hockey & -11.2 & 0.9 & -1.9 & -2.7 & 0.5 & 1.3 & -0.4 & 1.1 & {\bf 15.7} & 3.4 & 10.7\\
James Bond 007 & 29.0 & 302.8 & 768.5 & 1358.0 & 1312.5 & 5148.0 & 812.0 & {\bf 19809.0} & 14524.0 & 7869.2 & 16056.2\\
Kangaroo & 52.0 & 3035.0 & 7259.0 & 12992.0 & 14854.0 & {\bf 16200.0} & 1792.0 & 14637.5 & 13349.0 & 10484.5 & 11266.5\\
Krull & 1598.0 & 2665.5 & 8422.3 & 7920.5 & {\bf 11451.9} & 9728.0 & 10374.4 & 8741.5 & 10237.8 & 9930.8 & 9896.0\\
Kung-Fu Master & 258.5 & 22736.3 & 26059.0 & 29710.0 & 34294.0 & 39581.0 & 48375.0 & 52181.0 & 61621.5 & 59799.5 & {\bf 65836.5}\\
Montezuma's Revenge & 0.0 & {\bf 4753.3} & 0.0 & 0.0 & 0.0 & 0.0 & 0.0 & 384.0 & 0.0 & 2643.5 & 2643.5\\
Ms. Pac-Man & 307.3 & {\bf 6951.6} & 3085.6 & 2711.4 & 6283.5 & 6518.7 & 3327.3 & 5380.4 & 4416.9 & 2724.3 & 3749.2\\
Name This Game & 2292.3 & 8049.0 & 8207.8 & 10616.0 & 11971.1 & 12270.5 & {\bf 15572.5} & 13136.0 & 12636.5 & 9907.2 & 9543.8\\
Phoenix & 761.4 & 7242.6 & 8485.2 & 12252.5 & 23092.2 & 18992.7 & 70324.3 & {\bf 108528.6} & 10261.4 & 40092.2 & 46536.4\\
Pitfall!  & -229.4 & {\bf 6463.7} & -286.1 & -29.9 & 0.0 & -356.5 & 0.0 & 0.0 & -3.7 & -3.5 & -8.9\\
Pong & -20.7 & 14.6 & 19.5 & 20.9 & {\bf 21.0} & 20.6 & 20.9 & 20.9 & 20.7 & 20.7 & 20.6\\
Private Eye & 24.9 & {\bf 69571.3} & 146.7 & 129.7 & 103.0 & 200.0 & 206.0 & 4234.0 & 15198.0 & 15177.1 & 15188.8\\
Q*bert & 163.9 & 13455.0 & 13117.3 & 15088.5 & 19220.3 & 16256.5 & 18760.3 & {\bf 33817.5} & 21222.5 & 22956.5 & 21509.2\\
River Raid & 1338.5 & 17118.0 & 7377.6 & 14884.5 & 21162.6 & 14522.3 & 20607.6 & {\bf 22920.8} & 16957.3 & 16608.3 & 17380.7\\
Road Runner & 11.5 & 7845.0 & 39544.0 & 44127.0 & 69524.0 & 57608.0 & 62151.0 & 62041.0 & 66790.5 & 71168.0 & {\bf 111310.0}\\
Robotank & 2.2 & 11.9 & 63.9 & 65.1 & 65.3 & 62.6 & 27.5 & 61.4 & {\bf 71.8} & 68.5 & 70.4\\
Seaquest & 68.4 & 42054.7 & 5860.6 & 16452.7 & {\bf 50254.2} & 26357.8 & 931.6 & 15898.9 & 5071.6 & 8425.8 & 20994.1\\
Skiing & -17098.1 & {\bf -4336.9} & -13062.3 & -9021.8 & -8857.4 & -9996.9 & -19949.9 & -12957.8 & -10632.9 & -10753.4 & -10870.6\\
Solaris & 1236.3 & {\bf 12326.7} & 3482.8 & 3067.8 & 2250.8 & 4309.0 & 133.4 & 3560.3 & 2236.0 & 2760.0 & 2099.6\\
Space Invaders & 148.0 & 1668.7 & 1692.3 & 2525.5 & 6427.3 & 2865.8 & 15311.5 & {\bf 18789.0} & 2387.1 & 2448.6 & 10153.9\\
Stargunner & 664.0 & 10250.0 & 54282.0 & 60142.0 & 89238.0 & 63302.0 & 125117.0 & {\bf 127029.0} & 48942.0 & 70038.0 & 79521.5\\
Surround & -10.0 & 6.5 & -5.6 & -2.9 & 4.4 & 8.9 & 1.2 & {\bf 9.7} & 0.9 & 6.7 & 7.0\\
Tennis & -23.8 & -8.3 & 12.2 & -22.8 & 5.1 & 0.0 & 0.0 & 0.0 & 23.4 & 23.3 & {\bf 23.6}\\
Time Pilot & 3568.0 & 5229.2 & 4870.0 & 8339.0 & 11666.0 & 9197.0 & 7553.0 & 12926.0 & 18871.5 & {\bf 19401.0} & 18841.5\\
Tutankham & 11.4 & 167.6 & 68.1 & 218.4 & 211.4 & 204.6 & 245.9 & 241.0 & 263.2 & 272.6 & {\bf 275.4}\\
Up'n Down & 533.4 & 11693.2 & 9989.9 & 22972.2 & 44939.6 & 16154.1 & 33879.1 & 125754.6 & {\bf 194989.5} & 64354.2 & 70790.4\\
Venture & 0.0 & 1187.5 & 163.0 & 98.0 & 497.0 & 54.0 & 48.0 & 5.5 & 0.0 & 1597.5 & {\bf 1653.5}\\
Video Pinball & 16256.9 & 17667.9 & 196760.4 & 309941.9 & 98209.5 & 282007.3 & 479197.0 & {\bf 533936.5} & 261720.2 & 469366.0 & 496101.0\\
Wizard of Wor & 563.5 & 4756.5 & 2704.0 & 7492.0 & 7855.0 & 4802.0 & 12352.0 & 17862.5 & 18484.0 & 13170.5 & {\bf 19530.5}\\
Yars' Revenge & 3092.9 & 54576.9 & 18098.9 & 11712.6 & 49622.1 & 11357.0 & 69618.1 & 102557.0 & 109607.5 & 102760.0 & {\bf 148855.0}\\
Zaxxon & 32.5 & 9173.3 & 5363.0 & 10163.0 & 12944.0 & 10469.0 & 13886.0 & 22209.5 & 16525.0 & 25215.5 & {\bf 27582.5}\\
\hline
\end{tabular}

\end{sc}
\end{scriptsize}
\end{table}

\end{document}